\documentclass[journal]{IEEEtran}

\usepackage{amsmath,amsfonts}
\usepackage{algorithmic}
\usepackage{algorithm}
\usepackage{array}
\usepackage[caption=false,font=normalsize,labelfont=sf,textfont=sf]{subfig}
\usepackage{textcomp}
\usepackage{booktabs}
\usepackage{stfloats}
\usepackage{url}
\usepackage[export]{adjustbox}
\usepackage{balance}
\usepackage{multirow}
\usepackage[table,xcdraw]{xcolor}
\usepackage{multirow}
\usepackage{graphicx}
\usepackage{xcolor}
\usepackage{verbatim}
\usepackage{graphicx}
\usepackage{amsthm}     
\usepackage{amsmath}    
\usepackage{amssymb}    

\newtheorem{lemma}{Lemma}
\newtheorem{hypothesis}{Hypothesis}
\usepackage{fix-cm}
\usepackage{hyperref}
\usepackage{booktabs}
\usepackage{graphicx}
\usepackage{multirow}
\usepackage{amsmath}
\usepackage{xcolor} 
\usepackage{amsthm}

\begin{document}
%
%
%
\title{GANet: A Lightweight Group Attention Network for Plant Disease Identification on Embedded Systems}

\author{\IEEEauthorblockN{Anubhab Tripathi\IEEEauthorrefmark{2},
Bibek Das\IEEEauthorrefmark{2},
Asfak Ali\IEEEauthorrefmark{1}\IEEEauthorrefmark{2}, and 
Sheli Shinha Chaudhuri\IEEEauthorrefmark{2}}%
\thanks{\IEEEauthorrefmark{2}Department of Electronics and Telecommunication Engineering, Jadavpur University, Kolkata, India (e-mail: tripathianubhab@gmail.com, bibekdas.etce@gmail.com, asfakali.etce@gmail.com, shelism@rediffmail.com).}%
\thanks{\IEEEauthorrefmark{1}Corresponding author: Asfak Ali (e-mail: asfakali.etce@gmail.com).}%
}

\markboth{Journal of \LaTeX\ Class Files,~Vol.~14, No.~8, August~2015}%
{Shell \MakeLowercase{\textit{et al.}}: Bare Demo of IEEEtran.cls for IEEE Transactions on Magnetics Journals}
%
%
\IEEEtitleabstractindextext{%
\begin{abstract}
Global warming and climate change are major catalysts for plant diseases, significantly affecting food production worldwide. Addressing this issue requires continuous monitoring and early detection of plant diseases. While recent advancements in deep learning offer potential solutions, most models demand high processing power and struggle to perform efficiently on low-end electronic hardware. Furthermore, many existing works focus on specific crop diseases, limiting their broader applicability. This paper presents a lightweight Group Equivariance Attention Network (GANet) designed for the prediction and monitoring of 38 types of plant diseases. GANet introduces a novel Group Attention block and Group-Wise Feature Fusion block, making it both sparse and efficient. With only 0.001M parameters, the model achieves an impressive 99.99\% accuracy on the PlantVillage dataset. Additionally, GANet is optimized for deployment on edge devices such as the Raspberry Pi 4b and Jetson Nano, where it reaches 5 FPS and 11 FPS, respectively, demonstrating its potential for real-time plant disease monitoring on low-power devices.
\end{abstract}

\begin{IEEEkeywords}
IEEE, IEEEtran, IEEE Transactions on Magnetics, journal, \LaTeX, magnetics, paper, template.
\end{IEEEkeywords}}

\maketitle

\IEEEdisplaynontitleabstractindextext

%
\IEEEpeerreviewmaketitle

\section{Introduction}\label{sec1}

Plant diseases pose a significant threat to agricultural production, necessitating timely identification to mitigate their impact \cite{faithpraise2013automatic,nagaraju2020systematic}. Accurate diagnosis is crucial for ensuring high-quality agricultural products. This research aims to simplify the process of providing timely information about plant conditions, addressing the limitations of conventional disease monitoring techniques that prevent early intervention by local farmers \cite{zhang2020review}. The demand for efficient, quick, and cost-effective methods to automatically detect crop diseases is growing, driven by advancements in digital cameras, computational power, and the integration of Machine Learning and Digital Image Processing \cite{bera2019survey}. Most plant diseases are initially detected in the leaves, making image classification algorithms a viable solution for automated identification \cite{ebrahimi2017vision}. Computer Vision offers a non-contact, inexpensive approach to autonomous crop monitoring \cite{gayathri2019image}. Over the past fifteen years, primary classification algorithms such as Artificial Neural Networks (ANNs) \cite{sheikhan2012improved}, k-Nearest Neighbour (KNN) \cite{guettari2016blind}, and Support Vector Machines (SVMs) \cite{deepa2017steganalysis} have been widely used. Recently, Deep Learning (DL) approaches, particularly Convolutional Neural Networks (CNNs), have gained popularity due to their effectiveness in solving specific problems \cite{barbedo2019plant}. CNNs are widely used for image recognition tasks, both large \cite{huang2018deep} and small \cite{tang2018median}, due to their remarkable performance across various applications \cite{hu2019driving,kamilaris2018deep}. However, these models often have complex layers with millions of parameters, which retain information rather than understanding image transformations like translation, rotation, and symmetry. In order to exploit larger groups of symmetries in images and create smaller, more general networks, the idea of equivariance is essential. In order to identify plant diseases, this study suggests a lightweight Group Equivariance attention network with fewer parameters, lower computational costs, and simpler implementation.

The remainder of this study is described as follows:
\autoref{sec2} revised the work done in the past on plant disease recognition.In \autoref{sec3}, \autoref{sec4} and \autoref{sec5} generally discuss the proposed methodology, result analysis, and conclusion of this research.

\begin{figure*}[!t]
\centerline{\includegraphics[width=.95\textwidth]{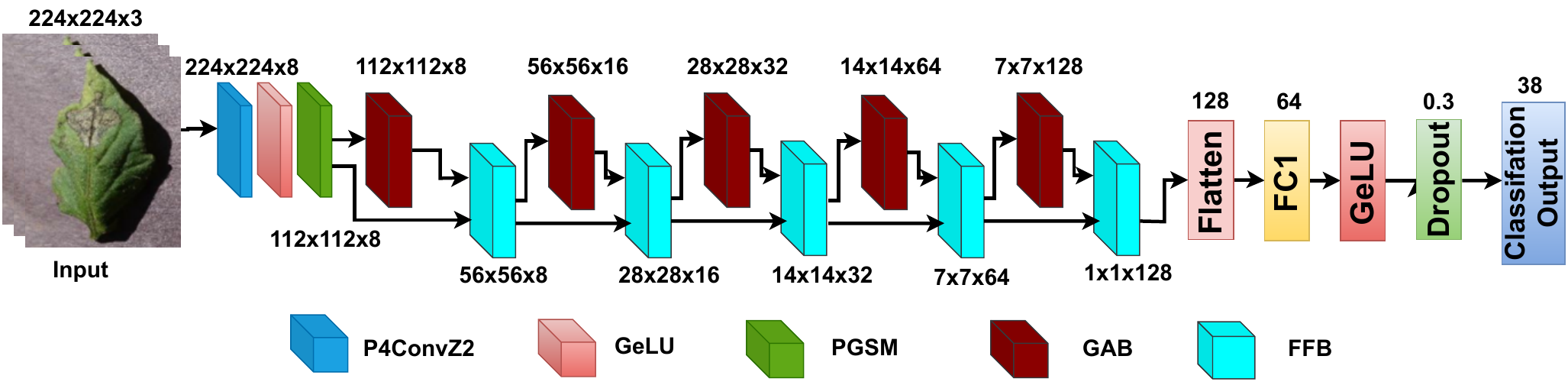}}
\caption{Overall Block Diagram of the Leaf Disease Multi-class Classification Model.}
\label{fig: Fianl}
\end{figure*}

\section{Literature Survey}\label{sec2}

Numerous studies have evaluated the performance of various deep learning architectures on datasets containing leaf images of varying proportions. These studies range from those achieving impressive results without explicit feature engineering \cite{mohanty2016using} to those decisively stating that ResNet-50 offers the best architecture \cite{9753846}. Diverse ideas have emerged in the quest to identify the most suitable model for plant disease detection. One notable study \cite{khan2023plant} focused on developing a plant disease detection model tailored for edge computing devices. This work utilized a dataset to train and optimize a MobileNetV3-small model \cite{howard2019searching}, ensuring efficient deployment on resource-constrained devices. A comprehensive literature review \cite{pandey2024survey} summarized 12 algorithms employing color, texture, shape, and hybrid approaches to classify plant leaf diseases from digital images. The accuracy rates of these algorithms varied from 80\% to 99\%, depending on the specific algorithm and dataset used.
Another systematic literature review \cite{ramanjot2023plant} discussed various aspects of plant disease detection, including main sources for collecting plant data, pre-processing and data augmentation techniques, feature extraction methods, attributes extracted, automated disease identification systems, image quality improvement methods, techniques to reduce overfitting, plant species, and disease classes studied. This review also highlighted the accuracy of existing approaches. These studies collectively underscore the significance of the PlantVillage dataset in advancing both research and practical applications in the field of automated plant disease diagnosis.


\begin{figure}[!t]
\centerline{\includegraphics[width=.37\textwidth]{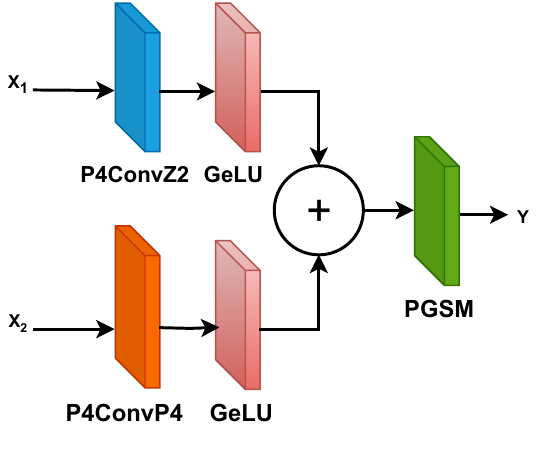}}
\caption{Block Diagram of Feature Fusion Block}
\label{fig: fusion}
\end{figure}
\section{Proposed Methodology}\label{sec3}
\subsection{Foundation of Group Convolution}
The fundamental concept of Group-equivariant Convolutional Neural Networks (G-CNNs), introduced by Cohen et al. in \cite{cohen2016group}, redefines the convolutional operation by embedding group theory principles into the design of convolutional layers. Traditional convolutional layers exploit translational invariance, which assumes that shifting an object within an image should not alter its representation. However, G-CNNs generalize this principle by introducing \textit{group-equivariant convolutions}\cite{group1}, which extend the scope of symmetry handling to include transformations such as rotations and reflections.

Mathematically, a group \( G \) is a set of transformations under which the data exhibits symmetries. For example, the dihedral group \( D_4 \) represents symmetries of a square, including rotations (by multiples of \(90^\circ\)) and reflections. In the context of convolutional neural networks, a G-equivariant convolution satisfies the property:
\begin{equation}
\Phi(g \cdot x) = g \cdot \Phi(x), \quad \forall g \in G,
\end{equation}
where \( x \) represents the input, \( g \cdot x \) denotes the transformation of \( x \) by \( g \), and \( \Phi \) is the convolution operation. This property ensures that applying a group transformation \( g \) to the input yields a predictable transformation of the feature map.

Two notable types of group convolutions discussed in \cite{cohen2016group} are as follows:

\subsubsection{\textbf{p4convp4 Convolution}}
This convolutional layer is designed to be equivariant under the dihedral group \(D_4\), which includes symmetries of a square such as rotations by multiples of \(90^\circ\) and reflections. The dihedral group \(D_4\) can be formally defined by the following presentation:
\begin{equation}
D_4 = \langle a, b \mid a^4 = b^2 = e, \, ab = ba^{-1} \rangle,
\end{equation}
where:
\begin{itemize}
    \item \(a\) represents a counterclockwise rotation by \(90^\circ\),
    \item \(b\) represents a reflection,
    \item \(e\) is the identity element.
\end{itemize}

In the \(p4convp4\) layer, filters are shared across all transformations within \(D_4\), enabling the network to effectively process features invariant to these transformations. For an input feature map \(x\), the output feature map \(y\) is computed as:
\begin{equation}
y[g] = \sum_{h \in G} k[g^{-1}h] \cdot x[h], \quad \forall g \in D_4,
\end{equation}
where:
\begin{itemize}
    \item \(k\) denotes the filter kernel,
    \item \(g^{-1}h\) represents the transformation of the kernel by the inverse of \(g\),
    \item \(h \in G\) denotes the group elements applied to the input \(x\).
\end{itemize}
This formulation ensures that a single filter adapts to all group transformations, significantly reducing the number of parameters and improving generalization.

\subsubsection{\textbf{p4convz2 Convolution}}
The \(p4convz2\) convolution extends the capabilities of \(p4convp4\) by incorporating an additional symmetry group \(Z_2\), which represents binary reflections (e.g., flipping). The combined group, \(Z_2 \times D_4\), enriches the network's ability to handle more complex symmetries. The convolution operation for \(p4convz2\) is mathematically defined as:
\begin{equation}
y[g, z] = \sum_{h \in G} k[g^{-1}h, z^{-1}] \cdot x[h, z], \quad \forall g \in D_4, \, z \in Z_2,
\end{equation}
where:
\begin{itemize}
    \item \(z \in Z_2\) represents the additional symmetry dimension,
    \item \(k[g^{-1}h, z^{-1}]\) denotes the transformed filter kernel considering both rotation and reflection symmetries,
    \item \(x[h, z]\) is the transformed input feature map under the combined group actions.
\end{itemize}

By leveraging this\(Z_2 \times D_4\), the \(p4convz2\) layer provides richer symmetry handling, further enhancing the model's ability to generalize across varied orientations and reflections. \newline

Group-equivariant convolutions provide a substantial improvement in both parameter efficiency and generalization capabilities of neural networks. In contrast to traditional convolutional layers, which require distinct filters to learn features for each transformation, G-convolutions enable a single filter to capture patterns invariant to all transformations within the group \(G\). This approach not only reduces the number of parameters but also mitigates the risk of overfitting, particularly in datasets with limited training samples. Additionally, by inherently handling transformations such as rotations and reflections, G-convolutions eliminate the need for extensive data augmentation. These attributes make G-CNNs a powerful and efficient solution for tasks demanding invariance to complex symmetries, such as the classification of plant diseases, where variability in orientation and structural patterns is prevalent.

\subsection{Proposed Model}

We propose a novel model for plant disease classification that integrates Group-equivariant Convolutional Neural Networks (G-CNNs) with a Feature Fusion Block (FFB) and an Attention Block (AB), as illustrated in \autoref{fig: Fianl}. This architecture leverages \( P4ConvZ2 \) and \( P4ConvP4 \) convolutions to capture geometric symmetries such as rotations and reflections, ensuring robustness against transformations common in plant disease imagery. These group-equivariant convolutions process feature maps \( x \) using shared filters across the transformation group \( G = D_4 \times Z_2 \), where \( D_4 \) denotes the dihedral group of eight symmetries, and \( Z_2 \) accounts for flipping invariance. The group-convolution operation is defined mathematically as:

\[
(y \ast \psi)(g) = \sum_{h \in G} x(h) \psi(g^{-1} h),
\]

where \( x(h) \) is the input feature map, \( \psi \) is the shared filter, and \( y \) is the output feature map. This shared-filter mechanism reduces the number of parameters, thereby enhancing efficiency and mitigating overfitting.
\begin{figure}[!t]
\centerline{\rotatebox{-90}{\includegraphics[width=0.25\textwidth]{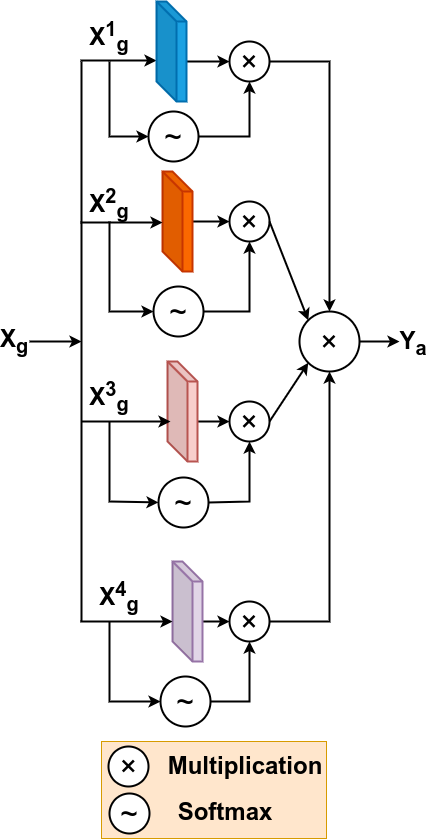}}}
\caption{Block Diagram of Attention Mechanism system.}
\label{fig: Attension}
\end{figure}
The proposed architecture employs group-equivariant convolutional layers, embedded with Plane Group Spatial Max Pooling (PGSM) to downsample feature maps. These layers are followed by a fully connected layer with 64 neurons, a dropout layer, and a softmax output layer that classifies images into one of 38 plant disease categories. The softmax function, defined as:

\[
\hat{y}_i = \frac{\exp(z_i)}{\sum_{j=1}^C \exp(z_j)},
\]

maps the logits \( z_i \) into normalized class probabilities, where \( C \) is the total number of classes.

\subsubsection{Feature Fusion Block (FFB)}

The FFB, shown in \autoref{fig: fusion}, enhances multi-scale feature representation by aggregating outputs from \( P4ConvZ2 \) and \( P4ConvP4 \) layers. The fusion operation combines the feature maps \( F_{Z_2} \) and \( F_{P_4} \) through element-wise summation:

\[
F_{\text{fusion}} = F_{Z_2} + F_{P_4},
\]

where \( F_{\text{fusion}} \) represents the aggregated feature map. This fusion mechanism integrates information from multiple transformations, enriching the feature space and enabling the model to capture complex patterns associated with plant disease images. This operation is particularly effective in handling visual variations and subtle differences inherent in plant disease datasets.

\begin{lemma}\textbf{Equivariance Preservation under Group Convolution Fusion: }
Let \( G \) be a finite group, and let \( \Phi_1, \Phi_2 : \mathcal{X} \to \mathcal{Y} \) be two group-equivariant mappings with respect to the group action \( g \cdot x \) for all \( g \in G \) and \( x \in \mathcal{X} \). That is,
\[
\Phi_1(g \cdot x) = g \cdot \Phi_1(x), \quad \Phi_2(g \cdot x) = g \cdot \Phi_2(x), \quad \forall g \in G.
\]
Then, the fused feature map \( \Phi_{\text{fusion}} : \mathcal{X} \to \mathcal{Y} \) defined by
\[
\Phi_{\text{fusion}}(x) = \Phi_1(x) + \Phi_2(x)
\]
is also group-equivariant with respect to \( G \). That is,
\[
\Phi_{\text{fusion}}(g \cdot x) = g \cdot \Phi_{\text{fusion}}(x), \quad \forall g \in G.
\]
\end{lemma}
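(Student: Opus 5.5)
The argument is a direct computation. Its only real content is making explicit the structural assumption that lets the action on the output space commute with addition. I will assume what is implicit in the setting of the FFB: $\mathcal{Y}$ is a vector space of feature maps (real-valued functions on the group or on $\mathbb{Z}^2\times G$). On this space $G$ acts \emph{linearly}, by the regular representation $(g\cdot f)(h) = f(g^{-1}h)$. Hence for every $g\in G$ the map $f\mapsto g\cdot f$ is additive: $g\cdot(f_1+f_2) = g\cdot f_1 + g\cdot f_2$. This holds because the action only reindexes coordinates, and reindexing commutes with pointwise addition. I will verify it in one line from the definition, since the lemma fails for a nonlinear action on $\mathcal{Y}$.

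With this in place, fix $g\in G$ and $x\in\mathcal{X}$ and compute. By definition of the fusion, $\Phi_{\text{fusion}}(g\cdot x) = \Phi_1(g\cdot x) + \Phi_2(g\cdot x)$. The equivariance hypotheses on $\Phi_1$ and $\Phi_2$ turn this into $g\cdot\Phi_1(x) + g\cdot\Phi_2(x)$. Linearity of the action then combines it into $g\cdot(\Phi_1(x)+\Phi_2(x)) = g\cdot\Phi_{\text{fusion}}(x)$. Since $g$ and $x$ were arbitrary, the claim follows.

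In the application, $F_{Z_2}$ and $F_{P_4}$ must be feature maps on the same space with the same group action, that is, of matching shape and the same group dimension. Otherwise the sum in the FFB is not defined. I will also note that finiteness of $G$ is not needed. The same argument gives equivariance for any finite linear combination $\sum_i \alpha_i\Phi_i$ of equivariant maps. The only potential obstacle is the linearity of the output action, which is settled by the explicit formula for the regular representation.
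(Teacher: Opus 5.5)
Your proof is correct and is the same direct computation as the paper's proof sketch: apply the definition of $\Phi_{\text{fusion}}$, then the two equivariance hypotheses, then additivity of the action on $\mathcal{Y}$. The only difference is that you state explicitly what the paper leaves implicit under \emph{linearity of addition}, namely that $G$ acts linearly on $\mathcal{Y}$ (as the regular representation does). That hypothesis is genuinely needed for the step $g\cdot\Phi_1(x)+g\cdot\Phi_2(x)=g\cdot(\Phi_1(x)+\Phi_2(x))$, since the lemma as literally stated can fail for a non-additive action.
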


\begin{proof}[Proof Sketch]
By linearity of addition and the equivariance properties of \( \Phi_1 \) and \( \Phi_2 \), we have:
\[
\Phi_{\text{fusion}}(g \cdot x) = \Phi_1(g \cdot x) + \Phi_2(g \cdot x)\] \[= g \cdot \Phi_1(x) + g \cdot \Phi_2(x)\] \[= g \cdot (\Phi_1(x) + \Phi_2(x))\] \[= g \cdot \Phi_{\text{fusion}}(x).
\]
Hence, \( \Phi_{\text{fusion}} \) is equivariant under the action of \( G \).
\end{proof}

\subsubsection{Group-Attention Block (GAB)}

The attention mechanism, illustrated in \autoref{fig: Attension}, selectively emphasizes the most discriminative regions in the input feature map \( X_g \). The block splits \( X_g \) into four components \( \{X_g^1, X_g^2, X_g^3, X_g^4\} \), and applies a self-attention mechanism to enhance features for each component. Specifically, a softmax-weighted feature refinement is applied as follows:

\[
\tilde{X}_{g_i} = X_{g_i} \cdot \text{softmax}(X_{g_i}),
\]

where:

\[
\text{softmax}(X_{g_i}) = \frac{\exp(X_{g_i})}{\sum_j \exp(X_{g_j})}.
\]

The self-attention mechanism computes attention weights that highlight the importance of specific regions within each component, enabling the block to focus on the most relevant features while suppressing less informative ones.

The final output of the self-attention block is obtained through element-wise multiplication of the refined components:

\[
Y_a = \tilde{X}_{g_1} \cdot \tilde{X}_{g_2} \cdot \tilde{X}_{g_3} \cdot \tilde{X}_{g_4}.
\]

This group-attention block effectively enhances critical regions in the feature map while reducing noise, thereby improving the model’s classification accuracy. By focusing on the most informative areas, the GAB complements the invariant features captured by the G-CNN layers, providing a robust mechanism for refining feature representations.

By integrating G-CNNs for symmetry handling, FFB for feature augmentation, and GAB for attention-based refinement, the proposed model demonstrates superior robustness and accuracy in plant disease classification, particularly in complex agricultural datasets.
\begin{hypothesis}\textbf{Attention-Aided Equivariance Boosting Hypothesis: }
Let \( \Phi : \mathcal{X} \to \mathcal{Y} \) be a group-equivariant feature extractor with respect to a group \( G \), and let \( \mathcal{A} : \mathcal{Y} \to \mathcal{Y} \) denote a self-attention mechanism applied over the feature space. Assume \( \mathcal{A} \) operates on localized patches or group-aligned subspaces \( \{Y_g\}_{g \in G} \subset \mathcal{Y} \). Then, the composite mapping \( \mathcal{F} = \mathcal{A} \circ \Phi \) exhibits enhanced selectivity for group-invariant salient features, i.e.,
\[
\mathcal{F}(g \cdot x) \approx g \cdot \mathcal{F}(x), \quad \forall g \in G, \, x \in \mathcal{X},
\]
under the assumption that \( \mathcal{A} \) preserves the structure induced by \( G \). Consequently, attention improves the equivariant model’s robustness to transformation-aligned feature localization.
\end{hypothesis}

\subsection{Dataset}
The PlantVillage dataset \cite{hughes2015open} is a widely utilized resource in the field of plant disease detection and classification through machine learning approaches. This comprehensive, publicly accessible dataset comprises over 54,000 high-resolution images of plant leaves. It includes images from 14 distinct crop species, with a total of 38 classes representing both healthy and diseased plants. The dataset covers a wide range of plant diseases, providing a diverse and balanced set of images, making it ideal for training and evaluating machine learning models. The images in this dataset have been carefully annotated, ensuring high-quality labels for both healthy and infected plant samples. Given its size and diversity, the PlantVillage dataset has become a standard benchmark for evaluating the performance of various image classification models in agricultural applications.

\subsection{Training and Hyperparameter Tuning}

The plant disease classification model is trained using a meticulously designed pipeline that optimizes both performance and generalization. The training process begins with a dataset of standardized 224 × 224 resolution images, augmented through random rotations (up to 15 degrees), random horizontal flips, and normalization. These augmentations enhance the model's robustness by simulating diverse real-world conditions. The Adam optimizer, known for its adaptive learning rate properties, is employed with an initial learning rate of 0.001. To ensure efficient convergence, the learning rate is dynamically adjusted using the \texttt{ReduceLROnPlateau} scheduler, which reduces the rate when validation loss plateaus. The CrossEntropyLoss function, commonly used for multi-class classification, is utilized to minimize the discrepancy between predicted and true labels. The model is trained for 100 epochs with a batch size of 128, allowing efficient computation and thorough training for convergence. A dropout rate of 0.5 is applied in the fully connected layers to mitigate overfitting and improve generalization. Training and validation losses are monitored throughout, with metrics such as accuracy and validation accuracy used to evaluate model performance across various disease categories. Model checkpoints are saved based on validation loss improvements, and early stopping is implemented to halt training if validation loss stagnates, avoiding unnecessary computation. This comprehensive configuration ensures high accuracy and robust generalization for real-world deployment. Table \ref{tab:exp_config} summarizes the key hyperparameters used in the experiment.

\begin{table}[!t]
    \centering
    \caption{Experimental Parameter Configuration}
    \resizebox{0.4\textwidth}{!}{%
        \begin{tabular}{@{}ll@{}}
            \toprule
            \textbf{Configuration}          & \textbf{Specification}              \\ \midrule
            Optimizer                      & Adam                        \\
            Loss function                  & CrossEntropyLoss            \\
            Initial learning rate          & 0.001                     \\
            Batch size                     & 128                          \\
            Learning rate schedule         & ReduceLRonPlateau                \\
            Training epochs                & 100                         \\
            Image resolution               & 224 × 224                   \\ 
            \bottomrule
        \end{tabular}%
    }
    \label{tab:exp_config}
\end{table}


\section{Result Analysis}\label{sec4}

\subsection{Experimental Result}
On the testing dataset, the model's remarkable 99.75\% accuracy showed that it could correctly classify the great majority of the samples. In addition to its high accuracy, the model also reported a loss of 0.009, indicating that its errors were minimal. The model's ability to generalize well to novel, unseen samples is demonstrated by its high accuracy and low loss. These findings suggest that the model can be confidently implemented and is a good fit for the current task. The result obtained using our model is now compared against some State-of-the-Art models in the table given below:

\begin{table*}[!t]
    \centering
    \caption{Comparison of results between Our and other studies on the PlantVillage dataset.}
    \resizebox{0.9\textwidth}{!}{%
        \begin{tabular}{lcccccccccc}
            \toprule
            \textbf{Model} & \textbf{No. of Parameters} & \textbf{Size (MB)} & \textbf{FLOP} & \textbf{Depth} & \textbf{Accuracy} & \textbf{Precision} & \textbf{Recall} & \textbf{F1-Score} & \textbf{AuC} \\
            \midrule
            VGG-19           & 143,667,240 & 549  & 1.96E+10  & 19  & 0.96  & 0.94 & 0.92 & 0.92 & 0.97 \\
            ResNet-152       & 58,772,356  & 384  & 1.13E+10  & 152 & 0.95  & 0.95 & 0.91 & 0.93 & 0.96 \\
            DenseNet-201     & 20,242,984  & 80   & 4.37E+09  & 201 & 0.94  & 0.95 & 0.92 & 0.93 & 0.94 \\
            MobileNetV2      & 3,538,984   & 14   & 3.14E+08  & 53  & 0.90  & 0.94 & 0.90 & 0.88 & 0.89 \\
            ResNet-50        & 25,636,712  & 89   & 1.10E+10  & 50  & 0.90  & 0.90 & 0.89 & 0.88 & 0.89 \\
            VGG-16           & 138,357,544 & 528  & 1.58E+10  & 23  & 0.96  & 0.94 & 0.92 & 0.90 & 0.93 \\
            InceptionV3      & 23,851,784  & 195  & 2.00E+09  & 159 & 0.95  & 0.94 & 0.94 & 0.94 & 0.96 \\
            Xception         & 22,910,480  & 88   & 1.56E+10  & 126 & 0.95  & 0.94 & 0.94 & 0.94 & 0.97 \\
            MobileNet        & 4,253,684   & 16   & 5.69E+11  & 88  & 0.96  & 0.98 & 0.97 & 0.97 & 0.99 \\
            Deep CNN         & 121,964     & 11   & 1.32E+08  & 88  & 0.98  & 0.98 & 0.97 & 0.97 & 0.99 \\
            \midrule
           \textbf{GANet (Ours)} & - & - & - & - & 0.99 & - & - & - & - \\
            \bottomrule
        \end{tabular}
    }
    \label{tab:plant_disease_classification}
\end{table*}

\subsection{Confusion Matrix}

\begin{figure*}[!t]
\centerline{\includegraphics[width=0.85\textwidth]{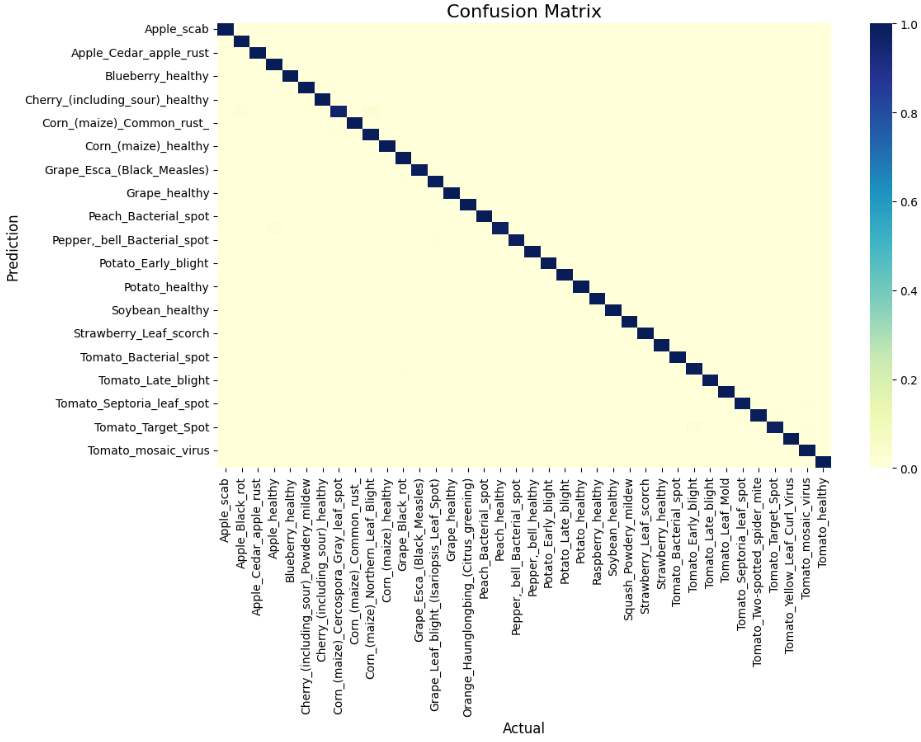}}
\caption{Multi-label Confusion Matrix.}
\label{fig: ConfusionMatrix}
\end{figure*}

Figure \ref{fig: ConfusionMatrix} illustrates the confusion matrix for the multi-class classification task, providing a detailed evaluation of the model's performance. Each row in the matrix corresponds to the actual class label, while each column represents the predicted class label. The diagonal elements reflect the correctly classified samples for each class, with darker shades indicating higher values. These diagonal entries directly measure the model’s accuracy for individual classes. The off-diagonal elements represent instances of misclassification, offering insights into potential confusion between specific classes. Elevated values in certain off-diagonal entries may indicate similarities in the feature space of the corresponding classes or limitations in the model’s ability to distinguish them effectively. To ensure fairness in performance evaluation, the confusion matrix is normalized, addressing class imbalances present in the dataset. Normalization ensures that the performance is assessed equitably across all classes, regardless of variations in sample size. This adjustment is particularly important for datasets with significant over- or under-representation of certain classes.

The following metrics further quantify the model's performance:

\begin{itemize}
    \item \textbf{Accuracy} measures the proportion of correctly classified samples and is defined as:
    \begin{equation}
    \text{Accuracy} = \frac{T_P + T_N}{T_P + T_N + F_P + F_N}
    \end{equation}
    where $T_P$, $T_N$, $F_P$, and $F_N$ represent True Positives, True Negatives, False Positives, and False Negatives, respectively. This study achieves an accuracy of 99.75\%. 
    \newline

    \item \textbf{Precision} indicates the proportion of correctly predicted positive instances out of all predicted positives:
    \begin{equation}
    \text{Precision} = \frac{T_P}{T_P + F_P}
    \end{equation}

    \item \textbf{Recall} reflects the proportion of correctly predicted positive instances out of all actual positives:
    \begin{equation}
    \text{Recall} = \frac{T_P}{T_P + F_N}
    \end{equation}

    \item \textbf{$F_1$ Score}, the harmonic mean of Precision and Recall, provides a balanced measure of these two metrics:
    \begin{equation}
    F_1 = \frac{2 \cdot \text{Precision} \cdot \text{Recall}}{\text{Precision} + \text{Recall}}
    \end{equation}
\end{itemize}

The confusion matrix, combined with these metrics, provides a comprehensive understanding of the model’s strengths and limitations. By visualizing the distribution of correct and incorrect classifications, one can identify specific areas for improvement, such as refining the training data, optimizing model parameters, or addressing systematic biases in classification.

\begin{figure}[!t]
\centerline{\includegraphics[width=270pt]{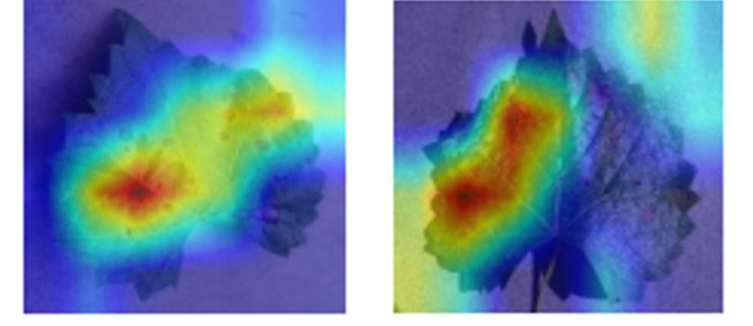}}
\caption{GradCAM Images of two leaves present in the dataset.}
\label{fig:GradCAM Images}
\end{figure}




\begin{figure*}[!t]
\centerline{\includegraphics[width=0.85\textwidth]{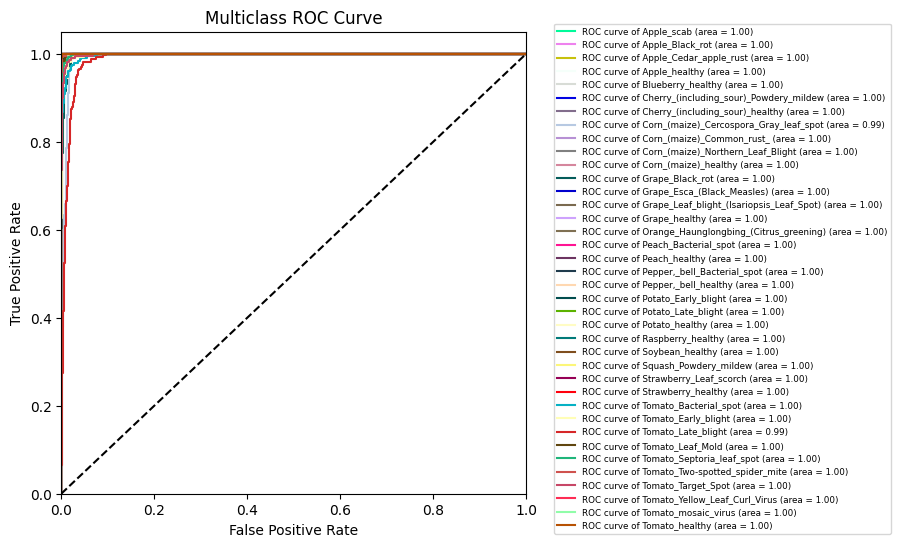}}
\caption{Multiclass ROC curves for all the classes along with their AUCs.}
\label{fig:Multiclass ROC}
\end{figure*} 

\subsection{GradCAM}

The Figure \ref{fig:GradCAM Images} illustrates the Gradient-weighted Class Activation Mapping (Grad-CAM) visualizations, which highlight the regions of the input images that the model considers most critical for its classification decisions. Grad-CAM computes a class-specific heatmap by utilizing the gradients of the classification score with respect to the feature maps in the final convolutional layer. This heatmap is overlaid on the input image to provide a visual explanation of the model's decision-making process. For example in this visualizations, warmer colors such as red and yellow indicate regions of higher importance for the model’s prediction, while cooler colors such as blue and purple represent regions with minimal impact. 

\subsection{Precision vs Recall Curves}

Figure \ref{fig:Precision vs Recall Curves} illustrates the precision-recall (PR) curves for each class in the multi-class classification task, providing a detailed assessment of the model’s ability to balance precision and recall across varying decision thresholds. The PR curve graphically represents the trade-off between precision, which measures the proportion of predicted positive cases that are correct, and recall, which measures the proportion of actual positive cases that are correctly identified. Each curve in the figure corresponds to a specific class, with the area under the precision-recall curve (PR) serving as a summary statistic that quantifies the overall performance for that class. A higher PR value indicates better precision-recall trade-offs, with an ideal curve reaching the top-right corner, reflecting both high precision and high recall. 

The figure highlights notable differences in the model's performance across classes. For instance, classes such as \textit{Apple\_healthy}, \textit{Blueberry\_healthy}, and many others achieve nearly perfect curves with PR values of 1.0, indicating exceptional classification performance with minimal false positives and false negatives. In contrast, classes such as \textit{Corn\_(maize)\_Cercospora\_Gray\_leaf\_spot} exhibit more irregular curves and lower AUPRC values, suggesting challenges in accurately distinguishing these classes, potentially due to overlapping features or class imbalance in the dataset.

\subsection{Multiclass ROC and Area Under Curves [AUCs]}

The Multiclass Receiver Operating Characteristic (ROC) curve is used to measure the performance of classification models identifying multiple classes. It shows the relationship between the true positive rate and the false positive rate. The ROC curve is used to compute the AUC score. The value of the AUC score ranges from 0 to 1. The higher the AUC score, the better the model. 

Figure \ref{fig:Multiclass ROC} illustrates the separability of classes across all possible thresholds, highlighting the model's effectiveness in classifying each class. For multiclass classification, two common approaches to compare classes are One-vs-Rest (OvR) and One-vs-One (OvO). In this study, we employed the OvR strategy for all 38 classes. In the OvR approach, a separate ROC curve is generated for each class by treating that class as the positive class and the remaining classes as the negative class. The overall multiclass ROC curve is then derived by averaging these individual ROC curves. 


The OvR approach is simpler and provides insights into the performance of the classifier for each individual class. However, it can be sensitive to class imbalance. Multiclass ROC analysis allows us to assess the trade-off between TPR and FPR. It identifies the optimal operating point for the classifier based on the specific requirements of the application. 

\subsection{t-SNE Visualization}
t-SNE (t-distributed Stochastic Neighbour Embedding) is an unsupervised nonlinear dimensionality reduction technique for data exploration and visualizing high-dimensional data. It focuses on preserving the pairwise similarities between data points in a lower-dimensional space. It allows us to understand more about underlying patterns and relationships in the data by providing a
feel and intuition on how data is arranged in higher dimensions. t-SNE is often used to visualize complex datasets into two and three dimensions. It is
concerned with preserving small pairwise distances. 

\begin{figure*}[!t]
\centerline{\includegraphics[width=0.8\textwidth]{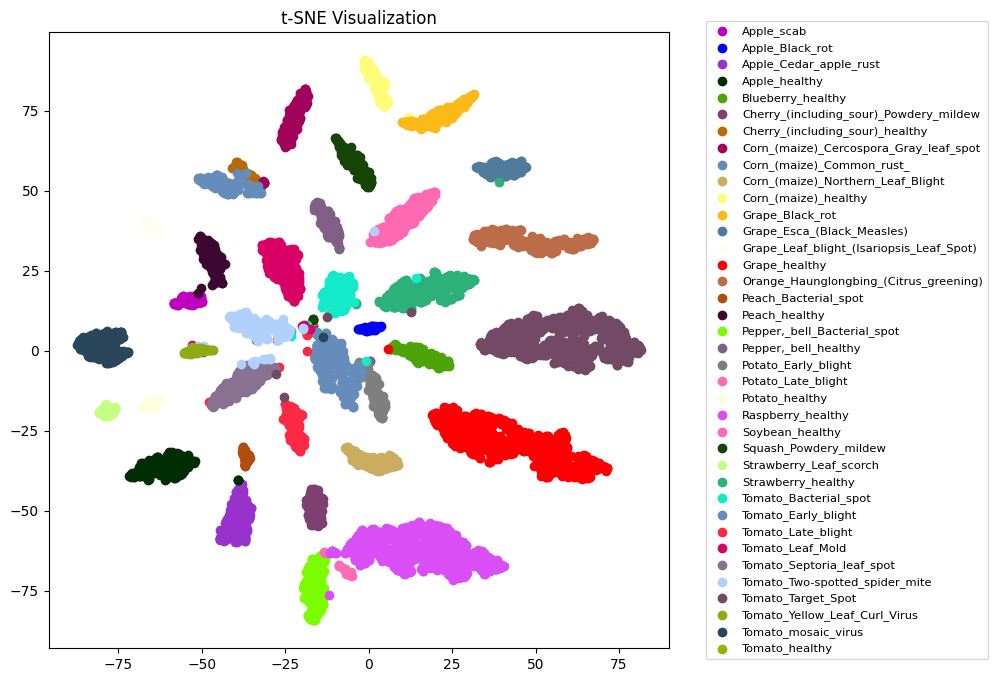}}
\caption{ t-SNE Visualization of all 38 classes of the PlantVillage dataset.}
\label{fig:tSNE Visualization}
\end{figure*}

Non-linear dimensionality reduction means that the algorithm allows us to separate data that cannot be separated by a straight line. It models a point being selected as a neighbor of another point in both higher and lower dimensions. t-SNE starts by calculating a pairwise similarity between all data points in the high-dimensional space using a Gaussian kernel. The similarity between two points is proportional to their Euclidean distance points that are close together have a higher probability of being picked as neighbors than points that are far apart. This similarity is represented by a conditional probability distribution in the high-dimensional space. The goal of t-SNE is to find a low-dimensional embedding where the similarities between points are preserved as much as possible. The algorithm tries to map higher dimensional data points onto lower dimensional space while preserving
the pairwise similarities. It uses gradient descent to minimize the divergence. The low-dimensional embedding is optimized to a stable state. The optimization process allows the creation of clusters and sub-clusters of similar points in the low-dimensional space that are visualized to understand the structure and relationship in the high-dimensional data.

\section{Conclusion and Future Scope}\label{sec5}
This paper implements the Group Equivariance Convolution techniques on the plant village dataset in order to detect if the leaves of the plants are diseased or healthy. We test the idea of exploiting symmetry and optimizing weight sharing to increase the efficacy of deep learning models. Our primary aim has been building an image classification model that can be implemented anywhere easily without much constraints. Reducing the number of parameters than the existing state-of-the-art techniques while maintaining accuracy is one of the numerous ways to satisfy that criterion. Constructed models have been modified using fine-tune. Results obtained using the proposed model are differentiated using various comparison metrics against existing methodologies. Better accuracy is achieved through the usage of ideal layer combination along with the best architectural design of the model. \par Future explorations in this domain will extend this work to solve different aspects by adding alternative components of the leaf features. One propitious course of action will be its integration with Real-time Image Processing attribute to tackle the problem of latency and bandwidth. Furthermore, One-shot and Few-shot learning procedures are likely to play significant roles to study the behavior of the diseases attacking different species of crops and further widen the application area. Following that, the optimal fit for the development of a novel model in the range of this dataset can be detected and widely-spread for mass deployment in everyday life.

\section{Declarations}

\textbf{Data Availability:} Code available at \href{https://www.kaggle.com/code/drthuganomics/weightout-plantdiseaseclassifier}{Kaggle Notebook}.

\textbf{Acknowledgement:} Part of this work was conducted in the Computer Vision and Pattern Recognition Lab, Dept. of ETCE, Jadavpur University, Kolkata.

\balance

\bibliography{reference}   
\bibliographystyle{ieeetr}   


\balance

\end{document}